\newcommand{\br}[1]{\ensuremath{\left[#1\right]}}
\newcommand{\p}[1]{\ensuremath{\left(#1\right)}}
\newcommand{\Prob}[1]{\mathbb{P}\p{#1}}
\newcommand{\bbP}{\mathbb{P}}
\newcommand{\E}{\mathbb{E}}
\newcommand{\N}{\mathbb{N}}
\newcommand{\R}{\mathbb{R}}
\newcommand{\mcH}{\mathcal{H}}
\newcommand{\X}{\mathcal{X}}
\newcommand{\Y}{\mathcal{Y}}
\newcommand{\D}{\mathcal{D}}
\newcommand{\mcZ}{\mathcal{Z}}
\newcommand{\Evv}[2]{\mathop{\mathbb{E}}_{#1}\left[#2\right]}
\newtheorem{corollary}{Corollary}
\newtheorem{theorem}{Theorem}
\newtheorem{lemma}{Lemma}
\newtheorem{definition}{Definition}
\newtheorem{prop}{Proposition}
\newcommand*\samethanks[1][\value{footnote}]{\footnotemark[#1]}
\title{Characterizing the Multiclass Learnability of \\Forgiving 0-1 Loss Functions}
\author{
Jacob Trauger\thanks{Equal contribution. Name order determined by the outcome of the 2024 Michigan-Ohio State football game.} \\
    Department of Statistics\\
    University of Michigan\\
    \texttt{jtrauger@umich.com}
\And
Tyson Trauger\samethanks\\
Department of Mathematics\\
The Ohio State University\\
\texttt{trauger.3@osu.edu}
\And
Ambuj Tewari\\
Department of Statistics\\
University of Michigan\\
\texttt{tewaria@umich.edu}
}
\begin{document}

\maketitle

\begin{abstract}%
  In this paper we will give a characterization of the learnability of forgiving 0-1 loss functions in the multiclass setting with effectively finite cardinality of the output and label space. To do this, we create a new combinatorial dimension that is based off of the Natarajan Dimension \citep{natarajan1989learning} and we show that a hypothesis class is learnable in our setting if and only if this Generalized Natarajan Dimension is finite. We also show how this dimension characterizes other known learning settings such as a vast amount of instantiations of learning with set-valued feedback and a modified version of list learning.
\end{abstract}

\section{Introduction}

Classification is one of the most common tasks in machine learning. Within classification, there is normally a split between binary classification (only two possible outputs) and multiclass classification (more than two possible outputs). The theoretical analysis of these settings shares the same split. In binary classification and multiclass classification, the 0-1 loss has been extensively studied and characterized \citep{valiant1984theory,  ben1995characterizations, brukhim2022characterization}. In binary classification, there are 16 possible loss functions $\ell$ such that $\ell(y,y') \in \{0,1\}$, but only two where $\ell(y_1,y_2) \neq \ell(y_1,y_2)$ and $\ell(y_2,y_1) \neq \ell(y_2,y_2)$. Thus, a majority of these loss functions are uninteresting and the two that are interesting are the opposites of each other (the 0-1 loss and $1 - \ell_{0-1}$). 
However, in classification for an output space with cardinality $k$ and a label space with cardinality $t$, there are $(2^t)^k$ different 0-1 loss functions. Most of these loss functions take the form of more ``forgiving" losses (i.e. $\ell(z,y) \in \{0,1\}$, but there can exist many $z_i,y_j$ such that $\ell(z_i,y_j) = 0$). Even more so, many of these settings can have interesting utility; settings such as paraphrase generation in natural language processing \citep{zhou-bhat-2021-paraphrase}, thresholding a metric, ranking with partial feedback \citep{raman2023learnability}, classifying graphs up to isomorphisms (such as is done for drug discovery \citep{yang2024molecule}), and many others can all be seen as allowing for some tolerance on the output. This paper attempts to characterize the learnability of a large class of learning problems with these ``forgiving" multiclass 0-1 loss functions.

The contributions of this paper are as follows:
\begin{enumerate}
    \item We give a new combinatorial dimension based off the Natarajan Dimension \citep{natarajan1989learning}, which we call the Generalized Natarajan Dimension.
    \item We show that the Generalized Natarajan Dimension characterizes the learnability of forgiving 0-1 loss functions in the multiclass setting with minimal extra assumptions on the loss function. Further, we show that the Generalized Natarajan dimension is incomparable to other known dimensions in the classification literature.
    \item We show our characterization implies characterization of various other settings seen in the machine learning literature.
\end{enumerate}

The paper is organized as follows. Section \ref{background} discusses the necessary background information. Section \ref{setup} details the problem set-up. Section \ref{results} shows the results needed to characterize the forgiving 0-1 loss functions along with how the Generalized Natarajan Dimension relates to other dimensions in the literature. Section \ref{apps} gives concrete applications of our characterization. Finally, Section \ref{conclusion} concludes and discusses possible future work.

\subsection{Related Works}\label{related works}
Under the PAC-learning model, binary classification learnability under the 0-1 loss is known to be characterized by the VC-dimension \citep{vapnik1974theory, shalev2014understanding}. 
For multiclass classification, there has also been a further split between finite and infinite label cases. For example, it is known that Empirical Risk Minimization (ERM) is a valid learner in the finite label case \citep{shalev2014understanding}, however, \citet{daniely2015multiclass} has shown that not every ERM is a learner in the infinite label case. The Natarajan Dimension has been known to characterize the learnability of the 0-1 loss in the finite label case for many years now \citep{natarajan1989learning, ben1995characterizations}, but only relatively recently has the DS-Dimension been shown to characterize the learnability of the 0-1 loss under the infinite label case \citep{daniely2014optimal, brukhim2022characterization}.

For related work around PAC-learning general loss functions, \citet{ben1995characterizations} show the finiteness of the Natarajan dimension is sufficient of learnability of any finite-label multiclass loss function.
\citet{david2016statistical} show that sample compression and multiclass learnability are equivalent, but stop short of creating any combinatorial dimension to characterize learnability.
\citet{hopkins2022realizable} show conditions to go from realizable learning to agnostic learning. However, they require assumptions on loss functions that we do not assume. Recently, \citet{hannekerepresentation} were able to upgrade \citet{hopkins2022realizable} to give an agnostic-to-realizable reduction for all multiclass loss functions with $\{0,1\}$ output, however, their rates are of the order of $\frac{1}{\epsilon^3}$ in our setting.

Other ways people have generalized PAC-learnability deal with altering the definition.
\citet{alon2022theory} and \citet{kalavasis2022multiclass} relax PAC-learnability to allow for partial concept classes, which allows the hypotheses to be undefined on parts of the input space.
\citet{pmlr-v291-bressan25b} alter realizable PAC-learnability to allow for more relaxed errors. While our theory allows for different output and label spaces, when they are the same, our realizable setting is a special case of theirs where $z=0$ and when $w = \ell$. Using the reduction from \citet{hannekerepresentation}, we see this does characterize our setting, however, the rates have at least an multiplicative factor of $\frac{1}{\epsilon^3}$, which is worse than ours of $\frac{1}{\epsilon^2}$. We also show their dimension is incomparable with ours; there exists hypothesis classes where our dimension and theirs can be arbitrarily (and in some cases infinitely) larger than each other. Thus, while their characterization along with the agnostic-to-realizable reduction does characterize our setting, our dimension is a more clean, precise, and interpretable characterization of the setting. In their paper they state ``it would be interesting to find characterizations beyond the J-cube dimensions that allow to fully recover known multiclass sample complexity bounds", which we do recover for our setting. 

We also note the similarities of our setting to the setting of list learning. List learning is when the algorithm is allowed to output a list of possible outputs and the loss is given by whether or not the label is present in the learner's outputted list. We can see that, given a multiclass hypothesis $h$ and a loss function $\ell$ whose output is in $\{0,1\}$, we can create an equivalent $h'$ and $\ell'$ where $h'(x) = \{y \mid \ell(h(x),y) = 0 \}$ and \[\ell'(\{y_1,\dots,y_k\},y) = \begin{cases}
0 & y \in \{y_1,\dots,y_k\}\\
1 & \text{otherwise}\end{cases}.\]
This shows that our setting is equivalent to a modified version of list-learning.
List learning was originally studied in \citet{brukhim2022characterization} and was shown to be characterized by the k-DS dimension when output list size is bounded by k \citep{charikar2023characterization}.
However, we note the k-DS dimension does not necessarily characterize our setting. In their setting, one is allowed to create a list learner that outputs \textit{any} bounded list and then it is compared to the best 0-1 learner in the hypothesis class ($\mcH \subset \Y^\X$ but $\mathcal{A}(S) \in \p{2^\Y}^\X$). In our setting, we have to compare to other list learners, not 0-1 learners. We can also only output lists that correspond to sets $\{y' \mid \ell(y,y') = 0 \}$. \citet{hanneke2024list} do show that agnostic/realizable list learning and uniform convergence are all equivalent in the setting where the the algorithms output is being compared to list learners whose lists are size $k$. However, they still allow the algorithm to output any list and our lists are not constrained to output a list of size exactly $k$. Thus, while there is this connection between the current list-learning literature and our setting, they are distinct settings.

For learning with set-valued labels, \citet{liu2014learnability} have studied a similar problem where there is a list with a ``true" label and other ``distractor" labels. They see how well ERM works when trying to find the hypothesis that predicts the true label well. Our setting, where any value in our label set is considered a ``true" label, has been characterized in the online setting \citep{raman2024online}. The batch setting characterization, however, has remained open as far as we are aware. 

\section{Background and Notation}\label{background}

We will first set up the notion of learnability, which is known as \textit{Probably Approximately Correct} (PAC) learnability \citep{valiant1984theory}.
\begin{definition}[Agnostic PAC-learnability]
    Let $\X$ be an input space, $\mcZ$ be our output space, $\Y$ be our label space, $\mcH \subset \mcZ^{\X}$ be a hypothesis class, and $\ell: \mcZ \times \Y$ be a loss function. 
    Then, $\mcH$ is agnostic PAC-learnable with respect to $\ell$ if there exists a learning algorithm $\mathcal{A}$ and a sample function $m: \R \times \R \rightarrow \N$ such that $\forall \epsilon, \delta > 0$, for every distribution $\D$ over $\X \times \Y$, if we have samples $S$ generated from $\D$, $|S| \geq m(\epsilon,\delta)$, then we have:
    \[\Prob{\E_{\D}\br{\ell(\hat{h}(x),y)} \leq \inf_{h \in \mcH}\E_{\D}\br{\ell(h(x),y)} + \epsilon} \geq 1 - \delta\]
    where $\hat{h} = \mathcal{A}(S) \in \mcH$.
\end{definition}
When $\inf_{h \in \mcH}\E_{\D}\br{\ell(h(x),y)} = 0$, it is called the \textit{realizable} case.

One of the most important loss functions is the 0-1 loss function \citep{wang2022comprehensive} and it is a common place to start when studying a new setting (or its equivalent in the setting) \citep{shalev2014understanding, brukhim2022characterization, charikar2023characterization, guermeur2007vc}:
\[\ell_{0-1}(y,y') = \begin{cases}
    0 & y=y'\\
    1 & \text{otherwise}
\end{cases}\]
For multiclass classification with finite labels using the 0-1 loss, it is known that this setting is characterized by the Natarajan dimension \citep{natarajan1989learning, shalev2014understanding}:
\begin{definition}[Natarajan Dimension]
    We say that a hypothesis class $\mcH$ Natarajan shatters a set $S = \{s_1,...,s_n\}$ if $\exists h_1,h_2 \in \mcH$ such that
    \begin{enumerate}
        \item $\forall s_i \in S$, $h_1(s_i) \neq h_2(s_i)$
        \item $\forall S' \subseteq S$ $\exists h \in \mcH$ where $\forall s \in S'$ $h(s) = h_1(s)$ and for all $s\in S \setminus S'$ $h(s) = h_2(s)$
    \end{enumerate} 
    The Natarajan dimension of a hypothesis class $\mcH$, denoted $Ndim(\mcH)$, is the cardinality of the largest set that the hypothesis class Natarajan shatters. 
\end{definition}

    One nice property of the 0-1 loss function is the \textit{identity of indescernibles}.

\begin{definition}[Identity of Indiscernibles]
A loss function $\ell: \Y \times \Y \rightarrow \R_{\geq 0}$ has the identity of indiscernibles property if
\[\forall y_1,y_2\in \Y, \quad \ell(y_1,y_2) = 0 \iff y_1=y_2\]
\end{definition}

This property is seen in many well-used loss functions such as the 0-1 loss and the squared loss. \citet{hopkins2022realizable} show that this property can be used to create an agnostic learner from a realizable learner and has been used to characterize learnability problems \citep{raman2024characterization, raman2024online}. In fact, \citet{hopkins2022realizable} prove this result by relating the loss between $h(x)$ and $h'(x)$ to the classification error $\Prob{h(x) \neq h'(x)}$. Thus, this notion of equality of labels and loss values are extremely interlinked when the loss function satisfies the identity of indiscernibles.

 In this work we will explicitly \textbf{not} assume the identity of indiscernibles or any generalization of it to different output and label spaces. In fact, our assumptions assume hardly any structure at all, allowing our results to generalize to many settings.

We shall now introduce some notation. Given a loss function $\ell: \mcZ \times \Y \rightarrow \{0,1\}$, let $C := C(\ell) = \{(z,y) \mid \ell(z,y) = 0\}$. We will refer to this the equality set. 
We also define $\sigma(z) : = \sigma_{\ell}(z) = \{y \mid \ell(z,y) = 0\}$. Thus, $\sigma(z)$ is all the labels $y$ where $(z,y)$ achieves $0$ loss. We similarly define $\tau(y): = \tau_{\ell}(y) = \{z \mid \ell(z,y) = 0\}$. We will oftentimes omit the $\ell$ subscript from $\sigma$, $\tau$ and $\ell$ parameter for $C$, but there is always an implicit loss function these are referring to. Further, notice that from any one of $C,\ell, \sigma$, or $\tau$, we can reconstruct the others.

Note how $\sigma$, $\tau$ create equivalence relations on $\mcZ$ and $\Y$ by $z_1 \sim_\sigma z_2 \iff \sigma(z_1) = \sigma(z_2)$ and similar for $\Y$ using $\tau$. Let $\sigma(\mcZ)$ and $\tau(\Y)$ be these equivalence classes. We note that $|\sigma(\mcZ)| < \infty \implies |\tau(\Y)| < 2^{|\sigma(\mcZ)|}$ and $|\tau(\Y)| < \infty \implies |\sigma(\mcZ)| < 2^{|\tau(\Y)|}$.

\section{Setup}\label{setup}
Let us have an output space $\mcZ$ and label space $\Y$.  We analyze loss functions $\ell$ that satisfy the following constraints:
\begin{itemize}
    \item $\ell: \mcZ \times \Y \rightarrow \{0,1\}$.
    \item $|\sigma(\mcZ)|< \infty$
    \item $\forall z_1,z_2 \in \mcZ$, $\sigma(z_1)\not\subset \sigma(z_2)$. We note $\subset$ means \textit{strict} subset; we do allow equalities.
\end{itemize}
\citet{hopkins2022realizable} have showed that one can have a realizable learner that is not agnosticly learnable in multiclass settings that do not have the identity of indiscernibles. The example they show relies on the loss function taking 3 values, $0$, $1$, and $c$. Thus, assumption \#$1$ allows our loss functions can only take values of $0$ or $1$, and our results show that this is sufficient to have both agnostic and realizable learnability.

Assumption \#$2$ is needed to allow our generalization of the Natarajan dimension to work. The Natarajan dimension characterizes multiclass learnability for finite output spaces. We can generalize this to infinite output and label spaces, so long as the equivalance class of the output class has finite cardinality. Since our loss can only distinguish over equivalence classes, this can be thought of as bounding the ``effective" output and label space sizes. We will refer to this as \textit{effectively finite}.

Assumption \#$3$ is here as the problem does not make intuitive sense without it. The goal of our learning problem is to minimize the loss. Since we have full access to the loss values, if the outputs that $z_1$ takes no loss on is strictly contained in the outputs $z_2$ takes no loss on (i.e. $\sigma(z_1) \subset \sigma(z_2)$), why would one ever want to output $z_1$? It would always be objectively better to output $z_2$. Thus, without loss of generality, we assume this does not happen, otherwise we can just replace all instances of $z_1$ with $z_2$ in our hypothesis class. In our characterization, this assumption is used in the proof that the finiteness of our dimension is necessary for learning. In Appendix \ref{assumpcounter} we show a counterexample to our main theorem of characterization if this assumption is broken as well. 

\section{The Generalized Natarajan Dimension}\label{results}
With spaces $\mcZ$, $\Y$ and loss function $\ell$ as defined above, the goal of this paper is to characterize PAC-learnability of a hypothesis class. 

First, notice if $z \sim_\sigma z'$ and $y \sim_\tau y'$, then
\[\ell(z,y) = \ell(z',y) = \ell(z',y') = \ell(z,y')\]

This shows that our loss function respects our equivalence relation, and thus $\exists \ell^{\sigma, \tau}: \sigma(\mcZ) \times \tau(\Y) \to \{0,1\}$ that agrees with $\ell$. Thus:
\begin{corollary}\label{equiv_learn_cor}
    Let $\D$ be a distribution on $\X \times \Y$, and $\D^{\sigma, \tau}$ be its quotient onto $\X \times \tau(\Y)$. Further, let $h \in \mcZ^\X$, and $h^\sigma$ be its quotient onto $\left(\sigma(\mcZ)\right)^\X$. Then
    \[\E_\D[\ell(h(x),y)] = \E_{\D^{\sigma, \tau}}[\ell^{\sigma,\tau}(h^{\sigma}(x),y)]\]
\end{corollary}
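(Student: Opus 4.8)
This is a change-of-variables identity, so the plan is to make the quotient maps explicit and then push the distribution $\D$ forward. Write $\pi : \Y \to \Y^C$ for the quotient map sending $y$ to its $\sim$-equivalence class. By construction $h^C = \pi \circ h$, and $\D^C$ is the pushforward of $\D$ under $(x,y) \mapsto (x, \pi(y))$. The preceding lemma, together with the definition of $\ell^C$ as the map that agrees with $\ell$, says precisely that $\ell$ factors through the quotient: $\ell(y_1,y_2) = \ell^C(\pi(y_1), \pi(y_2))$ for all $y_1, y_2 \in \Y$.

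First I would combine these facts pointwise: for every $(x,y) \in \X \times \Y$,
\[\ell(h(x), y) = \ell^C\p{\pi(h(x)), \pi(y)} = \ell^C\p{h^C(x), \pi(y)}.\]
Then I would integrate both sides against $\D$ and apply the law of the unconscious statistician to the right-hand side: since $\D^C$ is exactly the law of $(x, \pi(y))$ when $(x,y)\sim\D$, we get
\[\E_{(x,y)\sim\D}\br{\ell^C\p{h^C(x), \pi(y)}} = \E_{(x,\bar y)\sim\D^C}\br{\ell^C\p{h^C(x), \bar y}},\]
which chains with the previous display to give the claim.

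Because $\Y$ — and hence $\Y^C$ — is finite, this can equivalently be carried out with finite sums over labels rather than an abstract pushforward: for each $\bar y \in \Y^C$ the $\D^C$-mass of $\X \times \{\bar y\}$ is the $\D$-mass of $\X \times \bar y$ (viewing $\bar y$ as a subset of $\Y$), and one simply regroups the terms of $\E_\D\br{\ell(h(x),y)}$ according to which class $y$ lies in, using that $\ell(h(x),\cdot)$ is constant on each class. I expect no genuine obstacle here: the only points to be careful about are that $\ell^C$ and $h^C$ are well defined on the quotient, which is exactly what the preceding lemma supplies, and that the pushforward (equivalently, the regrouping) is legitimate, which is immediate in the finite-label setting.
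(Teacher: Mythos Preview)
Your proposal is correct and matches the paper's approach: the paper states this as an immediate corollary of the preceding lemma (that $\ell$ respects $\sim$, hence factors through $\ell^C$) and gives no further argument, and your write-up is precisely the natural unpacking of that implication via the pushforward/change-of-variables identity. There is nothing to add.
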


Therefore we have that $(\X,\mcZ, \Y, \mcH,\ell)$ is equivalent to $(\X,\sigma(\mcZ), \tau(\Y),\sigma \circ \mcH,\ell^{\sigma, \tau})$ as learning problems. One can also very easily convert a $(\X,\mcZ, \Y,\mcH,\ell)$ learning problem into a $(\X, \sigma(\mcZ),\tau(\Y),\sigma \circ \mcH,\ell^{\sigma, \tau})$ learning problem by creating the projection map $p:\mcZ \rightarrow \sigma(\mcZ)$, and then using $p \circ h$ for predictions. This way it is also very easy to recover the original $h \in \mcH$. This can also be easily implemented in code by replacing the quotient spaces $\sigma(\mcZ), \tau(\Y)$ with the set of representatives for the equivalence classes and then mapping each element in $\mcZ, \Y$ to its representative. 

We also note that we use $(\X, \mcZ^C,\Y^C, \mcH^C,\ell^{C})$ as alternative notation to $(\X, \sigma(\mcZ),\tau(\Y),\sigma \circ \mcH,\ell^{\sigma, \tau})$. When the context is clear, we will drop superscripts and abuse notation between the equivalence classes versions of $\mcZ$, $\Y$, $\mcH$, $\ell$ and themselves.

Now, we introduce our new dimension, the Generalized Natarajan dimension, based off \citet{natarajan1989learning}.
\begin{definition}[Generalized Natarajan Dimension]
    We say that a hypothesis class $\mcH$ and loss function $\ell$  Generalized Natarajan shatters a set $S = \{s_1,...,s_n\}$ if $\exists h_1,h_2 \in \mcH$ such that
    \begin{enumerate}
        \item $\forall s_i \in S$, $\sigma(h_1(s_i)) \neq \sigma(h_2(s_i))$
        \item $\forall S' \subseteq S$ $\exists h \in \mcH$ where $\forall s \in S'$ $\sigma(h(s)) = \sigma(h_1(s))$ and for all $s\in S \setminus S'$ $\sigma(h(s)) = \sigma(h_2(s))$
    \end{enumerate} 
    The Generalized Natarajan dimension of a hypothesis class $\mcH$, denoted $GNdim(\mcH, \ell)$, is the cardinality of the largest set that the hypothesis class generalized Natarajan shatters. 
\end{definition}

From this we have the immediate corollary:
\begin{corollary}\label{gn_nat_cor}
Given a hypothesis class $\mcH$ and loss function $\ell$ as studied in this paper, we have
\[GNdim(\mcH,\ell) = Ndim(\sigma \circ \mcH) = GNdim(\sigma \circ \mcH,\ell^{\sigma, \tau})\]
\end{corollary}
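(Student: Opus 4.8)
The plan is to prove both equalities at the level of shattered sets: I will show that a finite $S \subseteq \X$ is generalized-Natarajan shattered by $(\mcH, \ell)$ if and only if it is Natarajan shattered by $\mcH^C$, and also if and only if it is generalized-Natarajan shattered by $(\mcH^C, \ell^C)$. Taking the supremum of $|S|$ over all such $S$ then gives the three-way equality of dimensions. Throughout I write $p : \Y \to \Y^C$ for the projection, so that $h^C = p \circ h$ and $\mcH^C = \{p \circ h : h \in \mcH\}$, and I use the defining fact that $\sigma(a) = \sigma(b) \iff a \sim b \iff p(a) = p(b)$.

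For the equality $Ndim(\mcH^C) = GNdim(\mcH^C, \ell^C)$, the key step is to show that $\sigma_{\ell^C}$ is injective on $\Y^C$. First one checks that every $\sigma(y)$ is a union of $\sim$-equivalence classes: if $z \in \sigma(y)$ and $z \sim z'$, then by symmetry $y \in \sigma(z) = \sigma(z')$, so $z' \in \sigma(y)$. Since $\ell^C$ agrees with $\ell$, we get $\sigma_{\ell^C}([y]) = \{[z] : z \in \sigma(y)\} = p(\sigma(y))$; because $\sigma(y)$ is saturated under $\sim$, $p(\sigma(y)) = p(\sigma(y'))$ forces $\sigma(y) = \sigma(y')$, i.e.\ $[y] = [y']$. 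Consequently, in the definition of generalized-Natarajan shattering applied to $(\mcH^C, \ell^C)$, every occurrence of ``$\sigma_{\ell^C}(g_1(s_i)) \neq \sigma_{\ell^C}(g_2(s_i))$'' is equivalent to ``$g_1(s_i) \neq g_2(s_i)$'' and every occurrence of ``$\sigma_{\ell^C}(g(s)) = \sigma_{\ell^C}(g_1(s))$'' is equivalent to ``$g(s) = g_1(s)$''. Hence generalized-Natarajan shattering by $(\mcH^C, \ell^C)$ is literally ordinary Natarajan shattering by $\mcH^C$.

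For $GNdim(\mcH, \ell) = Ndim(\mcH^C)$, I would transport witnesses along $h \mapsto h^C$. If $h_1, h_2 \in \mcH$ witness generalized-Natarajan shattering of $S$, then $h_1^C, h_2^C \in \mcH^C$ witness Natarajan shattering: condition 1 translates directly via $\sigma(a) = \sigma(b) \iff p(a) = p(b)$, and for each $S' \subseteq S$ the hypothesis $h \in \mcH$ provided by condition 2 projects to $h^C \in \mcH^C$ that agrees with $h_1^C$ on $S'$ and with $h_2^C$ on $S \setminus S'$. Conversely, given $g_1, g_2 \in \mcH^C$ witnessing Natarajan shattering of $S$, lift them to $h_1, h_2 \in \mcH$ (possible since $\mcH^C$ is by construction the image of $\mcH$ under $p$); then $g_1(s_i) \neq g_2(s_i)$ gives $p(h_1(s_i)) \neq p(h_2(s_i))$, hence $\sigma(h_1(s_i)) \neq \sigma(h_2(s_i))$, and each $g \in \mcH^C$ realizing a subset $S'$ lifts to some $h \in \mcH$ whose $\sigma$-values agree with those of $h_1$ on $S'$ and of $h_2$ on $S \setminus S'$. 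This establishes the set-level equivalence, and hence the first equality.

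The whole argument is bookkeeping, so there is no genuinely hard step; the one place that needs care is the injectivity of $\sigma_{\ell^C}$ in the second paragraph — in particular the observation that each $\sigma(y)$ is saturated under $\sim$, which is exactly what makes the quotient loss behave like a loss satisfying identity of indiscernibles on $\Y^C$ and what allows the generalized and ordinary Natarajan conditions to collapse. I would also be slightly careful, in the backward directions above, to invoke that $\mcH^C = \{p \circ h : h \in \mcH\}$ so that every element of $\mcH^C$ genuinely has a preimage in $\mcH$.
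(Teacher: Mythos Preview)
Your proposal is correct and in the same spirit as the paper: the paper states this corollary without proof, calling it ``another immediate corollary'' of the equivalence $(\X,\Y,\mcH,\ell)\leftrightarrow(\X,\Y^C,\mcH^C,\ell^C)$, and your argument spells out precisely that implicit reasoning---in particular the injectivity of $\sigma_{\ell^C}$ on $\Y^C$ (via saturation of each $\sigma(y)$ under $\sim$) is exactly the mechanism that makes the quotient behave as though identity of indiscernibles holds, collapsing generalized-Natarajan to Natarajan on $\mcH^C$, while the bijection $h\mapsto h^C$ carries shattering witnesses back and forth for the first equality.
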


The Generalized Natarajan Dimension effectively changes what it means to be equal. One implicit property relied on when a loss function has the identity of indiscernibles is the relationship between equality and loss value. Given $\mcZ = \Y$, we have that $y_1=y_2$ if and only if $\ell(y_1,y_2) = 0$ and $y_1 \neq y_2$ if and only if $\ell(y_1,y_2) > 0$. This allows us to use equality of our labels to infer how the loss will act. This is now no longer the case; we can have $y_1 \neq y_2$ but still have $\ell(y_1,y_2) = 0$. Because of this, our dimension needs to explicitly use the loss values instead of being able to use the labels by proxy.

In Theorem \ref{char_thm} we show the Generalized Natarajan dimension characterizes our setting. Thus, the proxy for equality needed is for the equivalency of the set of labels where $z_1$ and $z_2$ achieve 0 loss for (i.e. $\sigma(z_1) = \sigma(z_2)$). This is a very strict type of equality. One can think of a setting where $|\mcZ|$ is very large (but finite) with $\sigma(z_1)$ and $\sigma(z_2)$ being almost equal. However, since they are not exactly equal, this dimension does not get any smaller than the Natarajan dimension. This is quite counter-intuitive as one would expect learnability to be much easier in this setting due to the vast amount of values that will output $0$ loss, but it is not as you can always make a distribution that puts a lot of mass where $\sigma(z_1)$ and $\sigma(z_2)$ differ. 

\subsection{Characterizing Learnability of Forgiving 0-1 Loss Functions}

With this setup, we state our main theorem:
\begin{theorem}\label{char_thm}
    A $(\X, \mcZ, \Y, \mcH,\ell)$ learning problem is PAC-learnable if and only if $GNdim(\mcH,\ell) < \infty$.
\end{theorem}
We prove this through showing the necessity of $GNdim(\mcH,\ell) < \infty$ in Lemma \ref{nfl} and then showing sufficiency in Lemma \ref{suf_lemma}.

The proof outline is to start by reducing the problem to the equivalent learning problem. Once this reduction is done, the only if direction is proven through an alteration of the proof of the No-Free-Lunch Theorem \citep{shalev2014understanding}. The if direction comes from the bounding the VC-dimension of the loss class by the Generalized Natarajan dimension. This then shows that ERM is a learner and  gives an upper bound on the sample complexity.


\begin{lemma}\label{nfl}
    Let us have a learning problem $(\X, \mcZ, \Y,\mcH,\ell)$ as has been studied in this paper. Then, $(\X,\mcZ, \Y,\mcH,\ell)$ is PAC-learnable $\implies$  the Generalized Natarajan Dimension of $\mcH$ is finite.
\end{lemma}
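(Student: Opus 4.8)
The plan is to prove the contrapositive via a No-Free-Lunch argument that is adapted to the lack of the identity of indiscernibles. Suppose $GNdim(\mcH,\ell)=\infty$ (equivalently, by Corollary \ref{gn_nat_cor}, $Ndim(\mcH^C)=\infty$). Fix an arbitrary candidate learner $\mathcal{A}$ with sample function $m(\cdot,\cdot)$, write $k=|\Y|$, pick $\epsilon,\delta>0$ with $\epsilon+\delta<\tfrac{1}{4k}$, set $m_0=m(\epsilon,\delta)$ and $n=2m_0$ (the degenerate $m_0=0$ case being handled by taking $n=2$). Since the Generalized Natarajan dimension is infinite, there is a set $S=\{s_1,\dots,s_n\}$ that $\mcH$ generalized-Natarajan shatters, witnessed by $h_1,h_2\in\mcH$: for every $i$ we have $\sigma(h_1(s_i))\neq\sigma(h_2(s_i))$, and for every $J\subseteq[n]$ there is $h_J\in\mcH$ with $\sigma(h_J(s_i))=\sigma(h_1(s_i))$ for $i\in J$ and $\sigma(h_J(s_i))=\sigma(h_2(s_i))$ for $i\notin J$.

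Next I would define the hard family. For $J\subseteq[n]$ let $\D_J$ draw $x$ uniformly from $S$ and then, given $x=s_i$, draw $y$ uniformly from $\sigma(h_1(s_i))$ if $i\in J$ and uniformly from $\sigma(h_2(s_i))$ otherwise; these sets are nonempty by Assumption \#4. Two facts drive the argument. First, each $\D_J$ is realizable: $h_J$ satisfies $\ell(h_J(s_i),y)=\mathds{1}[y\notin\sigma(h_J(s_i))]=0$ for every $y$ in the support of the conditional at $s_i$, so $\E_{\D_J}[\ell(h_J(x),y)]=0$. Second, the crucial ``no cheating'' estimate: for every label $w\in\Y$ and every $i$,
\[\tfrac12\,\Evv{y\sim\mathrm{Unif}(\sigma(h_1(s_i)))}{\ell(w,y)}+\tfrac12\,\Evv{y\sim\mathrm{Unif}(\sigma(h_2(s_i)))}{\ell(w,y)}\;\ge\;\tfrac{1}{2k}.\]
Indeed, the $j$-th term equals $|\sigma(h_j(s_i))\setminus\sigma(w)|/|\sigma(h_j(s_i))|$, which vanishes exactly when $\sigma(h_j(s_i))\subseteq\sigma(w)$; were this true for both $j=1,2$, then $\sigma(w)\supseteq\sigma(h_1(s_i))\cup\sigma(h_2(s_i))$, and since $\sigma(h_1(s_i))\neq\sigma(h_2(s_i))$ one of them would be a \emph{strict} subset of $\sigma(w)$, contradicting Assumption \#3. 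Hence at least one term is $\ge 1/|\sigma(h_j(s_i))|\ge 1/k$, and the other is nonnegative.

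Finally I would run the averaging step. Draw $J$ uniformly from $2^{[n]}$. Conditioning on the unlabeled part of a size-$m_0$ sample, at least $n-m_0=n/2$ of the indices $i$ are unseen; for such $i$ the bit $\mathds{1}[i\in J]$ is independent of $\mathcal{A}(S)$ yet selects which of the two conditional label laws generates $y$, so by the estimate above each unseen point contributes expected loss $\ge\tfrac{1}{2k}$. This gives $\E_J\,\E_{S\sim\D_J^{m_0}}\big[\E_{\D_J}[\ell(\mathcal{A}(S)(x),y)]\big]\ge\tfrac{n/2}{n}\cdot\tfrac{1}{2k}=\tfrac{1}{4k}$, so some $J$ achieves $\E_S\big[\E_{\D_J}[\ell(\mathcal{A}(S)(x),y)]\big]\ge\tfrac{1}{4k}$. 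But $\D_J$ is realizable and $\ell\le 1$, so PAC-learnability with parameters $\epsilon,\delta$ would force this quantity to be $\le\epsilon+\delta<\tfrac{1}{4k}$ — a contradiction. Hence $GNdim(\mcH,\ell)<\infty$.

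The step I expect to be the real obstacle, and the only genuine departure from the textbook No-Free-Lunch proof, is the ``no cheating'' estimate. The usual proof randomizes the label over the two values $h_1(s_i),h_2(s_i)$ and exploits that a prediction can match at most one of two distinct labels; here that fails, since a learner might output $w$ with $\ell(w,h_1(s_i))=\ell(w,h_2(s_i))=0$. Spreading the label uniformly over the entire zero-loss sets $\sigma(h_j(s_i))$ instead of over single labels, and invoking Assumption \#3 to forbid a single $\sigma(w)$ from swallowing both $\sigma(h_1(s_i))$ and $\sigma(h_2(s_i))$, is exactly what rescues the bound. The remaining pieces — picking $\epsilon,\delta$ below the fixed constant $\tfrac{1}{4k}$, counting unseen points, and the independence bookkeeping in the averaging step — are routine.
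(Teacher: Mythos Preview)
Your proposal is correct and follows essentially the same No-Free-Lunch argument as the paper: spread the label uniformly over $\sigma(h_j(s_i))$ rather than over single labels, and use the anti-chain assumption (\#3) to show that no prediction $w$ can have $\sigma(w)\supseteq\sigma(h_1(s_i))$ and $\sigma(w)\supseteq\sigma(h_2(s_i))$ simultaneously, yielding the $\Theta(1/k)$ per-coordinate lower bound. The only cosmetic difference is that the paper first passes to the quotient $(\X,\Y^C,\mcH^C,\ell^C)$ before running this argument, whereas you invoke Assumption~\#3 directly in $\Y$; the structure, the key estimate, and the final constant $\tfrac{1}{4k}$ are the same.
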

\begin{proof}[Proof Sketch]
    We show that $GNdim(\mcH, \ell) = \infty \implies (\X,\mcZ,\Y,\mcH,\ell)$ is not learnable. To do this, we modify the No Free Lunch Theorem to change the realizable distributions we are looking over. In the No Free Lunch Theorem for the 0-1 loss, we have a shattered set $\X$ and create distributions over $\X$ where the labels are labeled by the $h_i\in \mcH$ that witness the shattering of $\X$. This does not work in our setting for two reasons: one is that our output space and label space need not be the same and the second is, even if they were, we can have $h_i(x) \neq f(x)$ but still have $\ell(f(x), h_i(x)) = 0$. 
    To fix this, we work with equivalent learning problem $(\X,\mcZ^C,\Y^C,\mcH^C,\ell^C)$ to make sure if $h,h' \in \mcH^C$, $h(x) \neq h'(x)$, then $\exists y$ such that $\ell(h(x),y) \neq \ell(h_c'(x),y)$. Then, we crucially change the labels of our distributions to be uniform over $\sigma(h_i(x))$. The rest of the proof is then modified to work with these changes. Since we have showed this for the equivalent learning problem, by Corollaries \ref{equiv_learn_cor} and \ref{gn_nat_cor}, we have this holds for the original learning setting as well.
\end{proof}
We give the full proof of the above in Appendix \ref{lbproofs}.

Since we have shown necessity the of $GNdim(\mcH,\ell) < \infty$, we now show sufficiency.

\begin{lemma}\label{suf_lemma}
    Let us have a learning problem $(\X, \mcZ, \Y,\mcH,\ell)$ as has been studied in this paper. Then, $GNdim(\mcH,\ell)$ is finite $\implies$ Learnability of $(\X, \mcZ, \Y,\mcH,\ell)$.
\end{lemma}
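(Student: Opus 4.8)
The plan is to piggyback on the reduction to the equivalent learning problem $(\X,\Y^C,\mcH^C,\ell^C)$ and then import classical finite-label multiclass uniform convergence. First I would invoke Corollaries \ref{equiv_learn_cor} and \ref{gn_nat_cor} to replace $(\X,\Y,\mcH,\ell)$ by $(\X,\Y^C,\mcH^C,\ell^C)$, noting that $d := Ndim(\mcH^C) = GNdim(\mcH,\ell) < \infty$, that $|\Y^C| =: k' \le k$ is finite, and that $\ell^C$ is still $\{0,1\}$-valued. As in Lemma \ref{nfl}, I would suppress the $C$ superscript and show that an ERM over $\mcH^C$ is an agnostic PAC learner for $\ell^C$; pulling this learner back through the projection $p : \Y \to \Y^C$ (on input $S$, project the labels, run ERM over $\mcH^C$ to get $\hat h^C$, and predict with any $\hat h \in \mcH$ with $p \circ \hat h = \hat h^C$, which exists since $\mcH^C = \{p\circ h : h \in \mcH\}$) yields a learner for the original problem. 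Because Corollary \ref{equiv_learn_cor} gives $\E_\D[\ell(\hat h(x),y)] = \E_{\D^C}[\ell^C(\hat h^C(x),y)]$ for every $h$ and $\{\E_\D[\ell(h(x),y)] : h \in \mcH\} = \{\E_{\D^C}[\ell^C(h^C(x),y)] : h^C \in \mcH^C\}$, the infima over the two classes coincide, so the guarantee transfers verbatim.

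The heart of the argument is a uniform convergence statement for $\mcH^C$ under $\ell^C$. Here I would use the multiclass Sauer--Shelah (Natarajan) lemma: a class $\mcH^C \subseteq (\Y^C)^\X$ with Natarajan dimension $d$ and $k'$ labels has growth function $\Pi_{\mcH^C}(m) := \max_{|A| = m}\bigl|\{h|_A : h \in \mcH^C\}\bigr| \le (m (k')^2)^d$, which is polynomial in $m$. Since $\ell^C$ is $\{0,1\}$-valued, the induced loss class $\mathcal{L} := \{(x,y) \mapsto \ell^C(h(x),y) : h \in \mcH^C\}$ is a class of binary functions on $\X \times \Y^C$ whose restriction to any $m$ points realizes at most $\Pi_{\mcH^C}(m)$ patterns; hence $\mathrm{VCdim}(\mathcal{L})$ is finite (solving $2^{d'} \le (d'(k')^2)^d$ gives $d' = O(d\log(dk'))$). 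Classical agnostic VC theory then yields that with probability at least $1-\delta$ over $S \sim (\D^C)^m$,
\[
\sup_{h \in \mcH^C}\left| \frac1m \sum_{(x,y) \in S} \ell^C(h(x),y) \;-\; \E_{\D^C}\br{\ell^C(h(x),y)} \right| \;\le\; \frac{\epsilon}{2},
\]
once $m \ge m_0(\epsilon,\delta)$ for a suitable polynomial $m_0$ depending on $d$ and $\log k'$. A standard two-sided comparison then shows that any ERM minimizer $\hat h^C$ over $\mcH^C$ satisfies $\E_{\D^C}\br{\ell^C(\hat h^C(x),y)} \le \inf_{h \in \mcH^C}\E_{\D^C}\br{\ell^C(h(x),y)} + \epsilon$ with probability at least $1-\delta$, which is agnostic PAC-learnability of $(\X,\Y^C,\mcH^C,\ell^C)$, hence of $(\X,\Y,\mcH,\ell)$.

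The main obstacle is that the off-the-shelf finite-label multiclass learnability results are usually stated for the $0$-$1$ loss on $\Y^C$, whereas here we need uniform convergence for the coarser loss $\ell^C$; the resolution is precisely the observation above that $\mathcal{L}$ depends on $\mcH^C$ only through its growth function, so $\ell^C$ inherits the polynomial bound and the finite-VC-dimension conclusion. A secondary point to treat carefully is that $\mcH^C$ may itself be infinite (if $\X$ is), but the polynomial growth function rules out measurability pathologies, and since we prove \emph{uniform convergence} rather than merely realizable learnability the agnostic guarantee is immediate --- so, unlike settings that exploit the identity of indiscernibles, we do not need the realizable-to-agnostic reduction of \citet{hopkins2022realizable} here. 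I would also double-check the one structural claim used in passing, namely $\mcH^C = \{p\circ h : h \in \mcH\}$, so that the lift $\hat h \in \mcH$ always exists and the learner is proper.
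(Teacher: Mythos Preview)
Your argument is correct, but it takes a genuinely different route from the paper. The paper proves uniform convergence for $\ell^C$ by an explicit decomposition: it orders each $\sigma(y)$, defines pushforward distributions $\D^i$ via $\sigma_i(x,y) = (x,\sigma(y)_i)$, and shows $1 - L_\D(h) = \sum_{i=1}^k (1 - L_{\D^i}(h))$ where each $L_{\D^i}$ is an ordinary multiclass $0$-$1$ risk on the pushforward. The same identity holds for the empirical loss, so $|L_\D(h)-L_S(h)| \le \sum_{i=1}^k |L_{\D^i}(h)-L_{S^i}(h)|$, and a union bound over the $k$ slots combined with black-box multiclass uniform convergence for $\mcH^C$ under the $0$-$1$ loss finishes the proof; this costs a sample size of $m_{\mcH^C}^{UC}(\epsilon/k,\delta/k)$.

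You instead bypass the decomposition entirely by working with the binary loss class $\mathcal{L}$ directly: since the pattern $(\ell^C(h(x_j),y_j))_{j\le m}$ is determined by $(h(x_j))_{j\le m}$, the growth function of $\mathcal{L}$ is dominated by $\Pi_{\mcH^C}(m)$, which the Natarajan--Sauer lemma bounds polynomially, and a single application of binary VC uniform convergence suffices. This is cleaner and avoids the extra $k$ factors in $\epsilon$ and $\delta$ that the paper incurs from the union bound. The trade-off is that the paper treats multiclass uniform convergence as a pure black box (never opening up the growth-function bound), whereas your argument needs the Natarajan--Sauer lemma stated explicitly. Both approaches ultimately rest on the same combinatorial fact, just packaged differently.
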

We give two proofs of this lemma in Appendix \ref{ubproofs}. Appendix \ref{ubvc} uses the VC-dimension of the loss class and Appendix \ref{uborig} shows a different proof of a special case of when $\mathcal{Z} = \mathcal{Y}$. While this specialized proof also gives a worse sample complexity bound than the VC-dimension proof, it is included as we believe a direct proof through uniform convergence adds intuition.

Given lemmas \ref{nfl} and \ref{suf_lemma}, we have now proved Theorem \ref{char_thm}. We note the sufficiency proof does not rely on the strict subset assumption. Therefore, for any effectively finite learning problem with a loss function whose output is in $\{0,1\}$, we have that finite $GNdim(\mcH,\ell)$ is sufficient for learnability.

To find sample complexity, we use the VC-dimension of the loss class. Since we have now shown that $GNdim(\mcH, \ell)$ characterizes our setting, Lemma \ref{vciffgndimlemma} shows that the VC-dimension of the loss class does as well. From this, we use the well-known sample complexity bounds of binary classification get the following result:
\begin{corollary}\label{ub_cor}
    Given an $(\X, \mcZ, \Y, \mcH, \ell)$ learning problem studied in this paper, we have the following bounds on the agnostic PAC-learning sample complexity:
    \[\Omega\p{\frac{GNdim(\mcH, \ell) + \log(1/\delta)}{\epsilon^2}} \leq m(\epsilon,\delta) \leq O\p{\frac{GNdim(\mcH, \ell)\log\p{|\sigma(\mcZ)|} + \log(1/\delta)}{\epsilon^2}}.\]
\end{corollary}
While we do not make claims on the tightness of these bounds, we can see how this corollary recovers the known bounds when using the $0-1$ loss. While our No Free Lunch Theorem proof might indicate a possible $\max_{z \in \mcZ}|\sigma(z)|$ term in the denominator, we do not believe this to be the case. Note how 
\[1-\ell_{0-1}(y,y') = \begin{cases}
    1 & y=y'\\
    0 & \text{ otherwise }
\end{cases}\]
is an extremely forgiving loss function that has $\max_{y \in \Y}|\sigma(y)| = |\Y| - 1$. The 0-1 loss requires the exact correct output while $1 - \ell_{0-1}$ allows for all outputs but one; however, our sample complexity bounds would also indicate no change in samples required to learn for either loss. Our intuitive explanation of this is that these two loss functions both face the same problem, just in the reverse way. The $0-1$ loss requires discernment of hypotheses through finding the ``correct" labels, while the $1 - \ell_{0-1}$ loss requires discernment of hypotheses through finding the ``incorrect" labels. This can be seen by noting that empirical risk maximization of the $0-1$ loss is the same as empirical risk minimization on $1-\ell_{0-1}$. Since the sample complexity of ERMax and ERMin of the $0-1$ loss would both use uniform convergence arguments, we can see they would get the same sample complexity. Thus, while $1- \ell_{0-1}$ is an extremely forgiving loss function, it is not necessarily any more forgiving to learn.

\citet{pmlr-v291-bressan25b} state that a class is not learnable when there is at least a pair of outputs that are difficult to distinguish. Our results show that, while true, this does not tell the full picture. Two outputs being indistinguishable is also the only way we can get a smaller upper bound for the sample complexity as well. In fact, over all learning settings that satisfy our assumptions, the only way we can have our hypothesis class go from unlearnable to learnable is through changing the loss function to have more equivalent outputs.

However, this can be a blessing or a curse. If a loss function $\ell$ and hypothesis class $\mcH$ pair has Generalized Natarajan dimension $d$, then creating an $\ell'$ such that $C(\ell) \subset C(\ell')$ can lead to a larger Generalized Natarajan dimension, and thus possibly a larger sample complexity required to learn. We show an example of this, albeit contrived, in the proof of Proposition \ref{incomp_above}. 

We can see from above then how ``forgiveness" of a loss function is hypothesis class-dependent. From the perspective of the loss functions, we see that, for a given hypothesis class, all loss functions that induce the same equivalence classes on $\mcZ$ will have the same Generalized Natarajan dimension. For example, if a loss does not induce any equivalent outputs, then, by Corollary \ref{equiv_learn_cor}, we have that our Generalized Natarajan dimension of this hypothesis class/loss pair is equal to the Natarajan dimension of the hypothesis class. Thus, even if a loss function looks to be more forgiving, in terms of learnability it might not be. Therefore, the colloquial meaning of a ``forgiving" loss function and what loss functions are truly more forgiving are at odds with each other. 

\subsection{Relation to Other Dimensions}
In Corollary \ref{vciffcor} of the appendix we show that the VC-dimension of the loss class also characterizes our setting. The fact that the Generalized Natarajan dimension and the VC-dimension of the loss class both rely on both the hypothesis class and the loss function is crucial to them being able to characterize the setting. Below, we show that any dimension that relies solely on comparing hypotheses through the normal notions of ``shattering" will not characterize our setting.

Here we give the incomparability results of the Natarajan Dimension and the $d_J$ dimension \citep{pmlr-v291-bressan25b}. 
\begin{prop}\label{incomp_below}
    Let $\X$ be our input space and $\Y$, $|\Y| < \infty$ be our label space. There exists a hypothesis class $\mcH \subset \Y^\X$ such that $\forall J \subset \Y$, $Ndim(\mcH) = d_J(\mcH) = \infty$, but there exists a loss function $\ell$ such that $GNdim(\mcH, \ell) = 0$.
\end{prop}
\begin{proof}
    Let $|\X| = \infty$, $\mcH = \Y^\X$ and let $\ell(y,y') = 0$ for all $y,y' \in \Y$. Note that, since $\mcH$ is all functions from inputs to outputs, then, $Ndim(\mcH) = \infty$, and $\forall J \subset \Y$, $d_J(\mcH) = \infty$. However, since our loss partitions $\Y$ to all the same equivalence class, $\sigma\circ \mcH$ contains only 1 element, thus $GNdim(\mcH, \ell) = 0$.
\end{proof}

\begin{prop} \label{incomp_above}
    For all $q \in \N$, there exists a finite input space $\X$, finite output space $\Y$, finite hypothesis class $\mcH$, and loss function $\ell$ such that $\forall J \subset \Y$, $Ndim(\mcH) = d_J(\mcH) = 0$, but $GNdim(\mcH, \ell) = k$.
\end{prop}
\begin{proof}
    Let $q \in \N$, $\X = \{x_1,\dots, x_q\}$, $\Y = \{y_1,\dots, y_{q^{q+1}}\}$, let 
    \[h_i(x_j) = y_{q(i-1) + j},\]
    and let $\mcH = \{h_i \mid i \in [q^q]\}$.
    Notice how $Ndim(\mcH) = 0$ as no hypotheses have any overlapping outputs. In \citet{pmlr-v291-bressan25b}, they state that $Ndim(\mcH) \geq \max_J d_J(\mcH)$, thus $\forall J \subset \Y$, $d_J(\mcH) = 0$. Now, let $\mcH' = [q]^\X$. Note there are $|\mcH'| = q^q$, thus create an arbitrary bijection $f: \mcH' \rightarrow \mcH$. Partition $\Y$ on the following: 
    \[y \sim y' \iff \exists h',h^\dagger \in \mcH', \exists x_i \in \X \text{ where } h'(x_i) = h^\dagger(x_i) \text{ and } f(h')(x_i) = y, f(h^\dagger)(x_i) = y'\]
    Notice this partitions $\Y$ such that each partition is equivalent to a specific $c \in [q]$ for the outputs of $\mcH'$. Thus, let
    \[\ell(y,y') = \begin{cases}
        0 & y \sim y'\\
        1 & \text{ otherwise}
    \end{cases}.\]
    From the definition of the equivalence class, we can see that $\sigma \circ \mcH$ is then equivalent to all functions from $\{x_1,\dots, x_q\}$ to $[q]$, which has a Natarajan dimension of $q$.
\end{proof}
We do note that, due to results in  \citet{ben1995characterizations} and \citet{pmlr-v291-bressan25b}, we have that $GNdim(\mcH, \ell) = \infty \implies Ndim(\mcH) = \infty \implies \max_J d_J(\mcH) = \infty$. Thus, while the Generalized Natarajan dimension can be be infinitely smaller and arbitrarly larger than these dimensions, it can not be infintely larger.

We also note that the same proof examples and ideas as above can be used to show the incomparability of the Generalized Natarajan dimension to the $DS$-dimension \citep{daniely2014optimal, brukhim2022characterization} and the $k$-$DS$-dimension \citep{charikar2023characterization}.

\begin{corollary}
    There exist input spaces $\X, \X'$, output spaces $\Y, \Y'$, hypothesis classes $\mcH \subset \Y^\X, \mcH' \subset {\Y'}^{\X'}$, and loss functions $\ell,\ell'$ such that $GNdim(\mcH, \ell) = 0$ and $DS(\mcH) = $$k$-$DS(\mcH) = \infty$ and  $GNdim(\mcH', \ell') = q \in \N$ and $DS(\mcH') = k$-$DS(\mcH') = 0$ for any $k \in \N$, $k \geq 1$.
\end{corollary}
One can see that $Ndim(\mcH) = \infty \implies DS(\mcH) = \infty$ since all distributions over $\X$ and a finite subset of $\Y$ is included is a subset of distributions over all $\X \times \Y$. Thus, we have as well $GNdim(\mcH, \ell) = \infty \implies DS(\mcH) = \infty$. It is known that for any $k < k'$, $k$-$DS(\mcH) >$$k'$-$DS(\mcH)$ \citep{charikar2023characterization}, thus we posit a conjecture that $GNdim(\mcH, \ell) = \infty$ does \textbf{not} imply $k$-$DS(\mcH) = \infty$ for $k > 1$. This conjecture comes from the ideas stated in Section \ref{related works} where the modified list-learning setting can be both easier and harder than the normal list-learning setting.

\section{Applications of Characterization}\label{apps}
Given the minimal assumptions made on our learning setting, the Generalized Natarajan dimension is able to characterize many machine learning scenarios. In this section we detail some of these settings.
\subsection{Characterizing Learnability of Set Learning}
One application of the Generalized Natarajan dimension is to set learning. 
We define set learning as receiving set-valued feedback and checking whether or not the output is in the set. More precisely, let $\X$ be our input space, let $\mcZ$, $2 < |\mcZ| < \infty$, be our output space, let our label space be $\Y \subset 2^{\mcZ}$, and let us have a hypothesis class $\mcH \subset \mcZ^\X$. The samples we receive are of the form $(x, v) \in \X \times \Y$. Finally, our loss function becomes for any $z \in \Y$ and for any $y \in \Y$:
\[\ell(z,y) = \begin{cases}
    0 & z \in y\\
    1 & \text{otherwise}
\end{cases}\]
We give an example of what this loss function looks like for $\mcZ = \{1,2,3\}$ and $\Y = 2^{\mcZ}\setminus\{\emptyset\}$ in Figure \ref{fig:set_ex}.

\begin{figure}[!h]
    \centering
    \[
\begin{blockarray}{cccccccc}
& \{1\} & \{2\} & \{3\} & \{1,2\} & \{1,3\} & \{2,3\} & \{1,2,3\} \\
\begin{block}{c(ccccccc)}
 1 & 0 & 1 & 1 & 0 & 0 & 1 & 0  \\
 2 & 1 & 0 & 1 & 0 & 1 & 0 & 0  \\
  3 & 1 & 1 & 0 & 1 & 0 & 0 & 0 \\
\end{block}
\end{blockarray}
 \]
    \caption{An example loss matrix for the set learning setup where $\mcZ = \{1,2,3\}$ $\Y = 2^{\{1,2,3\}}\setminus\{\emptyset\}$. Since each $z \in \mcZ$ is only equivalent to themselves, we see this is characterized by the Natarajan dimension. Note this would still be true if $\Y = 2^{\{1,2,3\}}\setminus\{\emptyset, \{1\}, \{2\}, \{3\}\}$ as well.}
    \label{fig:set_ex}
\end{figure}
We note our assumption on $\ell$ would imply that there is no $z,z' \in \mcZ$ such that $z \in y \implies z' \in y$ but $\exists y' \in \Y$ where $z' \in y'$ but $z \not\in y'$. If false, this means there is a value $z$ where it would always be better to output $z'$ instead. Since we assume our loss function is known, this is easily resolved by removing $z$ and having any $h(x) = z$ be changed to $h(x) = z'$ for all $h \in \mcH$ if. Thus, our learning set-up does not cover all possible set learning scenarios, but the ones we do not characterize are misaligned with the goals of the learning problem we are trying to characterize. Thus, we have the following corollary:
\begin{corollary}
    For all instantiations of set learning that meet our assumptions, the learnability of the learning problem is characterized by the Generalized Natarajan dimension.
\end{corollary}

This problem has been characterized in the online setting \citep{raman2024online}, but the batch setting has remained open as far as we are aware until now.

This is a very useful result as many problems can be formulated as a set-learning problem. For example, any setting that is classifying up-to equivalence classes is included under this setting. This is because, for any $y,y' \in \Y$, $\ell(y,y') = \ell([y],[y'])$ where $[y],[y']$ are the equivalence classes $y,y'$ are in, respectively. In the following subsections we give two concrete examples.
\subsubsection{Classifying Graphs up to Isomorphism}
Classifying graphs is well-used in fields such as drug discovery \citep{yang2024molecule}. In cases such as molecules, it does not matter which graph is outputted, so long as it is isomorphic to the correct label graph. It is known that graph isomorphisms form a partition over the graph space, thus we have the following:
\begin{corollary}
    Given an input space $\X$ and output/label space of graphs of with finite amount of edges/nodes, the Generalized Natarajan dimension characterizes learnability over any 0-1 loss function that respects graph isomorphisms (i.e. $\forall y,y',y^\dagger \in \Y$, $y$ isomorphic to $y'$, $\ell(y,y^\dagger) = \ell(y',y^\dagger)$).
\end{corollary}

\subsubsection{Ranking with Partial Feedback}
A ranking learning problem is one where a learner is required to output a permutation of $[k]$ that indicates a ranking of $k$ elements. An example could be outputting what a learner believes to be the ranking of a person's favorite movies. However, one might only care about whether or not the learner got the person's top 10 movies correct. When only the first $p$ rankings are considered, this is known as ranking with partial feedback. 
\citet{raman2023learnability} characterize the learnability of this setting by showing its equivalence to requiring the 0-1 learnability of each index $i \in [p]$ in the ranking. They show for two classes of losses, one that cares only about the \textit{values} in the outputted ranking, $\mathcal{L}(\ell^{@p}_{prec})$, and one that cares about the \textit{values} and the \textit{order} of the ranking, $\mathcal{L}(\ell^{@p}_{sum})$, that learnability is equivalent to learnability of each index in their respective setting. Furthermore, they show that if any loss is learnable in their respective loss class, then every loss in that loss class is learnable. From their findings, one can also see that if a setting is learnable, then ERM is a valid learner. For a more nuanced discussion on their findings, we refer the readers to Sections 3 and 4 of their paper.

Note how ranking on partial feedback creates equivalence classes over the set of rankings. Depending on if $\ell \in \mathcal{L}(\ell^{@p}_{sum})$ or $\ell \in \mathcal{L}(\ell^{@p}_{prec})$, the equivalence classes may or may not depend on the specific order, but they will depend on the values in the top $p$. 
Let 
\[\ell_{0-1}^{sum @p}(y,y') = \begin{cases}
    0 &  y_i = y_i'\quad \forall i \in [p]\\
    1 & \text{otherwise}
\end{cases}\]
and 
\[\ell_{0-1}^{prec @p}(y,y') = \begin{cases}
    0 & \{y_1,\dots, y_p\} = \{y_1',\dots, y_p'\}\\
    1 & \text{otherwise}
\end{cases}.\]
We can see that $\ell_{0-1}^{sum @p} \in \mathcal{L}(\ell^{@p}_{sum})$ and $\ell_{0-1}^{prec @p} \in \mathcal{L}(\ell^{@p}_{prec})$. Notice that, given a ranking hypothesis class $\mcH$, these loss functions both fit the assumptions needed for our characterization. Thus, we have the following corollary:

\begin{corollary}
    The Generalized Natarajan dimension characterizes the learning setting of ranking with partial feedback.
\end{corollary}
This is the first characterization that we are aware of that gives a dimension on the entire hypothesis class and not on individual indices.

\subsection{Characterizing Learnability of Modified List Learning}

As noted in Section \ref{related works}, we characterize a modified version of list learning for many possible set-ups. This version is the ``flip" of set learning where now our hypotheses output lists and we want to minimize $\Prob{y \in h(x)}$. Thus, given possible labels $\Y$, $|\Y| < \infty$, a set of lists $\mcZ \subset 2^\Y$ where $\forall z,z' \in \mcZ$, $z \not\subset z'$, hypothesis class $\mathcal{H} \subset \mcZ^{\X}$ and loss:
\[\ell(\{y_{i_1},\dots,y_{i_n}\}, y) = \begin{cases}
    0 & y \in \{y_{i_1},\dots,y_{i_n}\}\\
    1 & \text{otherwise}
\end{cases}\]
for $|i| = n < |\Y|$, this gives us a list learning problem that satisfies our constraints. Thus, we have the following corollary:

\begin{corollary}
    The version of list learning described above is characterized by the Generalized Natarajan dimension.
\end{corollary}

\section{Conclusion and Future Work}\label{conclusion}
In this paper we have characterized a large set of more forgiving 0-1 loss functions in the effectively finite label multiclass setting. 

Through the properties of our Generalized Natarajan dimension, we can see that the Natarajan dimension will still characterize many of these loss functions as well. Many of these loss functions could be mostly zeros, but if the quotient space of the outputs is the same as the output space, the learnability is the same as in the 0-1 loss. While this is unintuitive, we believe this is due to how stringent PAC-learnability is. Since PAC-learnability requires learnability over all distributions, one can see how, given a hypothesis class and a loss function that does not reduce the output space, one can adversarially create a distribution that puts a lot of mass on the areas where the sets $\sigma(h(x))$ disagree. This will then nullify the ``forgiving" portion of the loss function.

Further work can be done by finding a way to remove the assumption on the loss function to allow for outputs where one output is ``dominated" by the other. Also, seeing whether or not extensions to effectively infinite output/label spaces would mirror the Natarajan/DS dimension split as in the 0-1 loss case would be an interesting result. Finally, since our analysis of sample complexity goes through the VC-dimension of the loss class, there is potential work to rather show the tightness of these bounds or to give tighter bounds that would show more savings in ``forgiveness".


\bibliographystyle{plainnat}
\bibliography{references}

\appendix
\section{Proving the Relationship Between $VC(\ell\circ \mcH)$ and $GNdim(\mcH, \ell)$}\label{vciffgndimsection}
Here we show that the VC-dimenion of the loss class and the Generalized Natarajan Dimension are bounded by each other.
\begin{lemma}\label{vciffgndimlemma}
Given a $(\X, \mcZ, \Y, \mcH, \ell)$ learning problem, we have
\[GNdim(\mcH, \ell) \leq VC(\ell \circ \mcH) \leq  4.67 GNdim(\mcH, \ell)\log_2(|\sigma(Z)| + 1).\]
\end{lemma}
\begin{proof}
    First we will prove $GNdim(\mcH, \ell) \leq VC(\ell \circ \mcH)$. Let $S$, $|S|=n$ GN-shatter $\mcH, \ell$. Then $\exists f_1,f_2 \in \mcH$ such that $\forall x \in S$, $\sigma(f_1(x)) \neq \sigma(f_2(x))$ and $\forall T \subset S$, $\exists h_T \in \mcH$ where $\forall x \in T$, $\sigma(h_T(x)) = \sigma(f_1(x))$ and $\forall x \in S\setminus T$, $\sigma(h_T(x)) = \sigma(f_2(x))$.
    Let $S' = \{(x,y) \mid x \in S, y \in \sigma(f_1(x)) \triangle \sigma(f_2(x))\}$ where $\triangle$ denotes the symmetric difference of the two sets. Using $f_1, f_2$ and $h_T$ as above we can see that $\ell \circ \mcH$ is VC-shattered by $S'$.

    Now we will prove $VC(\ell \circ \mcH) \leq 4.67 GNdim(\mcH, \ell)\log_2(|\sigma(Z)| + 1))$. It is easy to see that for any set $S \subset \mathcal{X} \times \mathcal{Y}$ we have $|(\ell \circ \mathcal{H})(S)| \leq |\sigma(\mathcal{H})(S_{\mathcal{X}})|$. This is because if $\ell \circ h$ and $\ell \circ h'$ differ on a point, then $h,h'$ must be different on that point as well.

    Note how $\sigma(\mathcal{H})$ only has $|\sigma(\mcZ)|$ different possible outputs. By \citet{ben1995characterizations}, we have for $m$ samples:
    \[|\sigma \circ \mcH| \leq \p{\frac{me(|\sigma(\mcZ)|+1)^2}{2Ndim(\sigma \circ \mcH)}}^{Ndim(\sigma \circ \mcH)}\]
    Which, by \ref{gn_nat_cor}, we have  \[\p{\frac{me(|\sigma(\mcZ)|+1)^2}{2Ndim(\sigma \circ \mcH)}}^{Ndim(\sigma \circ \mcH)} = \p{\frac{me(|\sigma(\mcZ)|+1)^2}{2GNdim(\mcH, \ell)}}^{GNdim(\mcH, \ell)}\]
    Thus, for a sample of size $VC(\ell \circ \mcH)$ to shatter $\ell \circ \mcH$, we have:
    \[2^{VC(\ell \circ \mcH)} \leq |\ell \circ \mcH| \leq |\sigma \circ \mcH| \leq \p{\frac{VC(\ell \circ \mcH)e(|\sigma(\mcZ)|+1)^2}{2GNdim(\mcH, \ell)}}^{GNdim(\mcH, \ell)}\]
    This is the same inequality set-up as seen in \citet{ben1995characterizations}, thus we get the same bounds, except we substitute in $GNdim(\mcH, \ell)$ and $|\sigma(\mcZ)|$ to get 
    \[VC(\ell \circ \mcH) \leq 4.67 GNdim(\mcH, \ell)\log_2(|\sigma(Z)| + 1)\]
\end{proof}

Given Thoerem \ref{char_thm} and our assumption of $|\sigma(\mcZ)| < \infty$ we have the following corollary:

\begin{corollary}\label{vciffcor}
    Given a $(\X, \mcZ, \Y, \mcH, \ell)$ learning problem studied in this paper, we have
    \[(\X, \mcZ, \Y, \mcH, \ell) \text{ is learnable } \iff VC(\ell \circ \mcH) < \infty\]
\end{corollary}

\section{Finite $GNdim(\mcH, \ell)$ is Necessary for Learnability Proof}\label{lbproofs}
\subsection{Proof by Adapting The No Free Lunch Theorem}\label{lbnfl}
\begin{proof}
    First, by Corollaries \ref{equiv_learn_cor} and \ref{gn_nat_cor}, we have an equivalent learning problem $(\X, \mcZ^C, \Y^C,\mcH^C,\ell^C)$ where $GNdim(\mcH,\ell) = Ndim(\mcH^C)$. We shall focus on this equivalent problem. Further, we will remove the $C$ superscript for ease of presentation at times.
    
    We show the contrapositive. Suppose $GNdim(\mathcal{H},\ell) = \infty$. Then, by shattering, for any $m$ we can find a set of $2m$ points where $f_1,f_2,h \in \mathcal{H}$ such that $\sigma(f_1(x)) \neq \sigma(f_2(x))$ for all $x \in \{x_1,\dots, x_{2m}\}$ but $\sigma(h(x)) = \sigma(f_1(x))$ for $x \in \{x_1,\dots,x_m\}$ and $\sigma(h(x)) = \sigma(f_2(x))$ for the others. Let $T$ = $2^{2m}$, and ${h_1,...,h_T}$ be all of the functions that are guaranteed to exists by the shattering condition.
    
    Let us define the distribution $D_{f_1}$ as follows:
    \begin{align*}
        &\bbP_{D_{f_1}}(x) = \begin{cases}
        1/2m & x \in \{x_1,\dots, x_{2m}\}\\
        0 & \text{otherwise}
    \end{cases}\\
    &\bbP_{D_{f_1}}(y \mid x) =\begin{cases}
        1/|\sigma(f_1(x))| & y \in \sigma(f_1(x))\\
        0 & \text{otherwise}
    \end{cases}
    \end{align*}
    Notice that $\bbP_{D_{f_1}}$ is a uniform distribution over our $2m$ inputs, and the output values are also uniformly distributed over the possible values where $f_1$ would get $0$ loss on. Let us define $D_i$ the same as above, but for $h_i$ instead of $f_1$. Note each of these distributions are realizable under $\mathcal{H}$. 
    
    Let $S_1^i,...,S_k^i$ denote all of the samples of size $m$ sampled from $D_i$. Since each sample of size $m$ is equally likely to be choosen, we have that
       \[\E_{S\sim D_i}[L_{D_i}(A(S))] = \frac1k \sum_{j=1}^k L_{D_i}(A(S_j^i))\]
        Thus we get
        \begin{align*}
            &\max_{i \in [T]} \E_{S\sim D_i}[L_{D_i}(A(S))] = \max_{i \in [T]}\frac1k \sum_{j=1}^k L_{D_i}(A(S_j^i)) \geq\\
            &\frac1T\sum_{i=1}^T\frac1k \sum_{j=1}^k L_{D_i}(A(S_j^i)) = \\
            &\frac1k \sum_{j=1}^k\frac1T\sum_{i=1}^T L_{D_i}(A(S_j^i)) \geq\\
            &\min_{j \in [k]}\frac1T\sum_{i=1}^TL_{D_i}(A(S))
        \end{align*}
        Let us fix a $j \in [k]$, and let $\{s_{1},...s_p\} = \mathcal{X}\setminus S_j^i$. Notice $p \geq m$ as $|\mathcal{X}| = 2m$ and $|S_j^i| \leq m$. We then get for any function $f: \mathcal{X} \to \mcZ^C$
        \begin{align*}
            &L_{D_i}(f) = \frac1{2m}\sum_{r=1}^{2m}\Evv{v \sim Unif(\sigma(h_i(x_r)))}{1_{f(x_r) \not\in \tau(v)}} \geq \\
            &\frac{1}{2p}\sum_{r=1}^{p}\Evv{v \sim Unif(\sigma(h_i(s_r)))}{1_{f(s_r) \not\in \tau(v)}}
        \end{align*}
        Thus,
        \begin{align*}
            &\frac1T \sum_{i=1}^T L_{D_i}(A(S_j^i)) \geq \frac1T\sum_{i=1}^T\frac{1}{2p}\sum_{r=1}^{p}\Evv{v \sim Unif(\sigma(h_i(s_r)))}{1_{A(S_j^i)(s_r) \not\in \tau(v)}} =\\
            & \frac{1}{2p}\sum_{r=1}^{p}\frac1T\sum_{i=1}^T\Evv{v \sim Unif(\sigma(h_i(s_r)))}{1_{A(S_j^i)(s_r) \not\in \tau(v)}} \geq\\
            &\frac12 \min_{r\in [p]}\frac1T\sum_{i=1}^T\Evv{v \sim Unif(\sigma(h_i(s_r)))}{1_{A(S_j^i)(s_r) \not\in \tau(v)}}
        \end{align*}
        Fix $r \in [p]$. By partitioning $h_1,...,h_T$ into disjoint tuples $(h_i,h_{i'})$ by $h_i$ is paired with $h_{i'}$ if $h_i(x) \neq  h_{i'}(x)$ iff $x = r$. Thus on these tuples, we have that for exactly at most one of them (without loss of generality, assume its $h_i$)
\[\Evv{v \sim Unif(\sigma(h_i(s_r)))}{1_{A(S_j^i)(s_r) \not\in \tau(v)}} = 0.\]
We have this for at most exactly one because we quotiented by equivalences over $\sigma$ and we assumed no $\sigma$ set is a proper subset of another. Thus there is only one equivalence class that is always correct on $\sigma(h_i(s_r))$ and that is the class $h_i(s_r)$ itself. 
On the other hand, we then know that on $h_{i'}(s_r)$, since it is a different class, there must be at least one value in $\sigma(h_{i'}(s_r))$ that is not guessed by $A(S_j^i))(s_r)$. Thus
\begin{align*}
    &\Evv{v \sim Unif(\sigma(h_{i'}(s_r)))}{1_{A(S_j^i)(s_r) \not\in \tau(v)}} = \frac{|\sigma(h_{i'}(s_r)))\setminus \sigma(A(S_j^i)(s_r))|}{|\sigma(h_{i'}(s_r)))|} \geq \frac{1}{|\sigma(h_{i'}(s_r)))|} \geq \frac1{\max_{z \in \mcZ^C}|\sigma(z)|}
\end{align*}
With this, we see that over every tuple
\[\Evv{v \sim Unif(\sigma(h_i(s_r)))}{1_{A(S_j^i)(s_r) \not\in \tau(v)}} + \Evv{v \sim Unif(\sigma(h_{i'}(s_r)))}{1_{A(S_j^i)(s_r) \not\in \tau(v)}} \geq \frac{1}{\max_{z \in \mcZ^C}|\sigma(z)|}\]
yielding
\[\frac1T\sum_{i=1}^T\Evv{v \sim Unif(\sigma(h_i(s_r)))}{1_{A(S_j^i)(s_r) \not\in \tau(v)}} \geq \frac1T\left(\frac{T}{2}\frac1{\max_{z \in \mcZ^C}|\sigma(z)|}\right) = \frac1{2\max_{z \in \mcZ^C}|\sigma(z)|}\]

Therefore, we get
\begin{align*}
    &\max_{i \in [T]} \E_{S\sim D_i}[L_{D_i}(A(S))] \geq \frac12 \min_{r\in [p]}\frac1T\sum_{i=1}^T\Evv{v \sim Unif(\sigma(h_i(s_r)))}{1_{A(S_j^i)(s_r) \not\in \tau(v)}} \geq\\
    &\frac12 \min_{r\in [p]}\frac{1}{2\max_{z \in \mcZ^C}|\sigma(z)|} = \frac{1}{4\max_{z \in \mcZ^C}|\sigma(z)|}
\end{align*}
Thus, we have shown there exists a realizable distribution where $\mathcal{A}$ does not do well, thereby showing this is not PAC-learnable, which is what we needed to show for the contrapositive.
\end{proof}

\subsection{Sample Complexity Bounds}\label{lbsc}
Given Corollary \ref{vciffcor}, we know that $\ell \circ \mcH$ is learnable if and only if $(\X, \mcZ, \Y, \mcH, \ell)$ is learnable. Notice for any distribution on $\X \times \Y$, there is an equivalent distribution over $\X \times \Y \times \{0\}$. Learnability of the 0-1 loss of $\ell \circ \mcH$ on the latter distribution implies learnability of $\mcH$ with $\ell$ on the former. It is known that $\ell \circ \mcH$ has a sample complexity lower bound of 
\[\Omega\p{\frac{VC(\ell \circ \mcH) + \log(1/\delta)}{\epsilon^2}}\]
samples as it is a binary classifier. Thus, since $GNdim(\mcH, \ell) \leq VC(\ell \circ \mcH)$, we have a lower bound of
\[\Omega\p{\frac{GNdim(\mcH, \ell) + \log(1/\delta)}{\epsilon^2}}.\]

\section{Finite $GNdim(\mcH, \ell)$ is Sufficient for Learnability Proof}\label{ubproofs}
\subsection{Using VC Dimension of Loss Class}\label{ubvc}
\begin{proof}
Suppose we have an $(\X, \mcZ, \Y, \mcH, \ell)$ learning scenario where $GNdim(\mcH, \ell)$ is finite.
By Lemma \ref{vciffgndimlemma}, we have that 
\[VC(\ell \circ \mcH) \leq  4.67 GNdim(\mcH, \ell)\log_2(|\sigma(Z)| + 1).\]
Since $|\sigma(\mcZ)| < \infty$, we know that $VC(\ell \circ \mcH)$ must be finite.
It is known that the upper bound of the sample complexity of $\ell \circ \mcH$ is 
\[O\p{\frac{VC(\ell \circ \mcH) + \log(1/\delta)}{\epsilon^2}}\]
due to it being a binary classifier.
By using the same logic as in the proof of Section \ref{lbsc} we then get an upper bound on the sample complexity needed to learn $(\X, \mcZ, \Y, \mcH, \ell)$ by replacing $VC(\ell \circ \mcH)$ with $4.67 GNdim(\mcH, \ell)\log_2(|\sigma(Z)| + 1)$. Thus we have learnability of $(\X, \mcZ, \Y, \mcH, \ell)$ if $GNdim(\mcH, \ell)$ is finite.

\end{proof}
\subsection{Special Case where $\mathcal{Z} = \mathcal{Y}$}\label{uborig}
\begin{proof}
    Much like Lemma \ref{nfl}, we shall use the $(\X,\Y^C,\mcH^C,\ell^C)$ equivalent learning problem.
    
    Suppose $(\mcH,\ell)$ has finite generalized Natarajan dimension. For each $y \in \Y^C$, put an arbitrary ordering on $\sigma(y)$, and then for all $k > |\sigma(y)|$, let $\sigma(y)_k$ be some label not in $\mathcal{Y}^C$ (from here on out denoted 0). Let the maps $\sigma_i: \Y^C \cup \{0\} \to \Y^C \cup \{0\}$ denote the function $\sigma_i(y) = \sigma(y)_i$, where $\sigma_i(0) = 0\ \forall i$.  We can extend this map (abusing notation) to $\sigma_i: \X \times (\Y^C \cup \{0\}) \to \X\times(\Y^C \cup \{0\})$, $\sigma_i(x,y) = (x,\sigma_i(y))$. Now, given a distribution $D$, let us define a new distribution $D^i := D\circ\sigma_i^{-1}$ by the pushforward of $D$ through $\sigma_i$. Thus $D^i$ can be thought of as the distribution where each $y$ ``becomes" $\sigma(y)_i$. Notice then
    \begin{align*}
        &1 - L_\D(h) = \Evv{(x,y) \sim \D}{1_{h(x) \in \sigma(y)}} = \Evv{(x,y) \sim \D}{\sum_{i=1}^{|\sigma(y)|}1_{h(x) = \sigma(y)_i}} =\\
        &\Evv{(x,y) \sim \D}{\sum_{i=1}^{|\sigma(y)|}1_{h(x) = \sigma(y)_i} + \sum_{i=|\sigma(y)| + 1}^k0} = \Evv{(x,y) \sim \D}{\sum_{i=1}^{|\sigma(y)|}1_{h(x) = \sigma(y)_i} + \sum_{i=|\sigma(y)| + 1}^k1_{h(x) = 0}}=\\
        &\sum_{i=1}^{k}\Evv{(x,y) \sim \D}{1_{h(x) = \sigma(y)_i}} = \sum_{i=1}^{k}\Evv{(x,y') \sim \D^i}{1_{h(x) = y'}} = \sum_{i=1}^k1- L_{\D^i}(h)
    \end{align*}
  
    Where 
    \[L_{\D^i}(h) := \Evv{(x,y)\sim \D^i}{1_{h(x) \neq y}}\]
    is the 0-1 loss for a multiclass learning problem on the distribution of $D^i$, 
    and where the penultimate equality comes from
    
    \begin{align*}
    &D(\{(x,y) \mid   h(x) = \sigma_i(y)\})=\\
    &D(\{(x,y) \mid  h(x) = y' \wedge y' = \sigma_i(y)  \})=\\
    &D(\{(x,y) \mid h(x) = y' \wedge \sigma_i(x,y) = (x,y')\})=\\
    &D\circ \sigma_i^{-1}(\{(x,y') \mid h(x) = y'\}) =\\
    &D^i(\{(x,y') \mid h(x) = y'\})
    \end{align*}
    For the sample loss of a set of size $m$, we can do a similar decomposition.
    \begin{align*}
        &1-L_S(h) = \frac{|\{(y_j,h(x_j)) \in C\}|}{m} = \frac{\sum_{j=1}^m1_{h(x_j) \in \sigma(y_j)}}{m} =\\
        &\frac{\sum_{j=1}^m\sum_{i=1}^k1_{h(x_j) = \sigma(y_j)_i}}{m} = \frac{\sum_{i=1}^k\sum_{j=1}^m1_{h(x_j) = \sigma(y_j)_i}}{m} = \sum_{i=1}^k1-L_{S^i}(h)
    \end{align*}
    Using these decompositions, we get
    \begin{align*}
        &|L_D(h) - L_S(h)| = |L_D(h) -1+1-L_S(h)| =\\
        &\left|-\sum_{i=1}^k1- L_{D^i}(h) + \sum_{i=1}^k1- L_{S^i}(h)\right| = \left|\sum_{i=1}^kL_{D^i}(h) - L_{S^i}(h)\right| \leq \sum_{i=1}^k\left|L_{D^i}(h) - L_{S^i}(h)\right|
    \end{align*}
    Since we have finite generalized Natarajan dimension, by corollary \ref{gn_nat_cor} we have that $\mathcal{H}^C$ is multiclass learnable, and thus has the uniform convergence property with function $m_{\mcH^C}^{UC}$ (\citet{shalev2014understanding}). Since each $L_{D^i},L_{S^i}$ are equivalent to a multiclass problem on a different distribution, pick sample $S$ with $|S| \geq m_{\mcH^C}^{UC}(\epsilon/k,\delta/k)$. 
     By union bound we get:
    \begin{align*}
        &\Prob{\sup_{h \in \mathcal{H}} |L_D(h) - L_S(h)| > \epsilon} \leq \Prob{\sup_{h \in \mathcal{H}} \sum_{i=1}^k\left|L_D^i(h) - L_S^i(h)\right| > \epsilon} \leq\\
        &\Prob{ \sum_{i=1}^k \sup_{h \in \mathcal{H}}\left|L_D^i(h) - L_S^i(h)\right| > \epsilon} \leq \Prob{\bigcup_{i=1}^k \p{\sup_{h \in \mathcal{H}} \left|L_D^i(h) - L_S^i(h)\right| > \epsilon/k}} \leq k\delta/k = \delta
    \end{align*}
Therefore, we have that $\mcH^C$ has the uniform convergence property and thus ERM is a valid learner for the problem.
\end{proof}

\section{Miscellaneous}
\subsection{Example of a Learning Problem where $GNdim(\mcH, \ell) = \infty$ but it is Learnable}\label{assumpcounter}
Let $\X$ be an infinite input space, $\Y = \{1,2,3\}$, $\mcH = \{1,2\}^\X \cup \{3\}^\X$, and let 
\[\ell(1,1) = 0, \quad  \ell(1,2) = \ell(2,1) = 1, \quad \ell(3,1) = \ell(3,2) = \ell(3,3) = 0\]
Note how $GNdim(\mcH,\ell) = VC(\ell \circ \mcH) = \infty$ as we have all functions from $\X$ to $\{1,2\}$, but the algorithm that always chooses $h \in \mcH$ such that $\forall x \in \X$, $h(x) = 3$ will always be a valid PAC-learner.

\end{document}